\documentclass{article} 
\usepackage{iclr2027_conference,times}

\usepackage{amsmath,amsfonts,bm}

\def\eqref#1{equation~\ref{#1}}

\def\1{\bm{1}}

\DeclareMathAlphabet{\mathsfit}{\encodingdefault}{\sfdefault}{m}{sl}
\SetMathAlphabet{\mathsfit}{bold}{\encodingdefault}{\sfdefault}{bx}{n}

\usepackage{hyperref}
\usepackage{url}
\usepackage{amsmath,amsthm,amssymb,xcolor}
\newtheorem{proposition}{Proposition}
\newtheorem{corollary}{Corollary}

\usepackage{booktabs}
\usepackage{xcolor}
\usepackage{graphicx}
\usepackage{listings}
\usepackage{algorithm}
\usepackage{algpseudocode}
\lstdefinestyle{prompt}{
  basicstyle=\ttfamily\scriptsize, breaklines=true, breakatwhitespace=true,
  columns=fullflexible, keepspaces=true, frame=single,
  literate={⇒}{{$\Rightarrow$}}1 {∀}{{$\forall$}}1 {—}{{--}}1
}
\usepackage{listings}
\lstdefinestyle{lean}{
  basicstyle=\ttfamily\footnotesize, breaklines=true,
  columns=fullflexible, keepspaces=true,
  literate={ℝ}{{$\mathbb{R}$}}1 {∃}{{$\exists$}}1 {∈}{{$\in$}}1
           {→}{{$\to$}}1 {⟨}{{$\langle$}}1 {⟩}{{$\rangle$}}1
           {∀}{{$\forall$}}1 {≠}{{$\neq$}}1 {≤}{{$\le$}}1 {′}{{$'$}}1
}
\usepackage{amssymb}
\DeclareUnicodeCharacter{211D}{\ensuremath{\mathbb{R}}}   
\DeclareUnicodeCharacter{2203}{\ensuremath{\exists}}      
\DeclareUnicodeCharacter{2208}{\ensuremath{\in}}          
\DeclareUnicodeCharacter{2200}{\ensuremath{\forall}}      
\DeclareUnicodeCharacter{21D2}{\ensuremath{\Rightarrow}}  

\title{Can the Library Speak? Self-Advertised Method Selection for Formal Proving}

\author{
Xiaopeng Yuan$^{1}$, Suijin Wang$^{2}$, Yanli Wang$^{3}$, Haibo Jin$^{1}$, \\
\textbf{Peng Kuang$^{1}$, Jerry Wang$^{1}$, Lijun Yu$^{1}$, Haohan Wang$^{1}$} \\
$^{1}$University of Illinois at Urbana-Champaign \quad $^{2}$Tsinghua University \quad $^{3}$Imperial College London \\
\texttt{\{xyuan30,haohanw\}@illinois.edu}
}

\iclrfinalcopy 
\begin{document}

\maketitle
\lhead{Preprint}

\begin{abstract}
LLM-based formal provers can retrieve relevant lemmas and prior proofs, but relevance alone does not say whether a mathematical method can be used on the current theorem. A method has prerequisites, a target, an intended action, and obligations that its use leaves to prove. Methods that look equally related to a theorem may therefore differ substantially in whether they offer a plausible next step. We formulate this as an applicability-aware method-selection problem and introduce \emph{self-advertisement}: before candidates are ranked, a model generates a problem-specific proposal for each one, stating what part of the goal it targets, what action it would take, and what conditions that action requires. We organize 82 reusable methods from Putnam 2000--2014 as \emph{Method Contracts}, which pair applicability descriptions with Mathlib anchors, a checked example or scaffold, and expected proof obligations. A single batched call elicits proposals across the library; vague or unsupported proposals are demoted, yielding a ranked shortlist accompanied by inspectable claims about each candidate's use. We analyze when similarity-based representations cannot distinguish methods with different applicability, how errors in applicability estimates affect shortlist quality, and what a checked scaffold guarantees under its stated assumptions. Against lexical, embedding, and embedding-plus-LLM reranking baselines, self-advertisement achieves 95.0\% hit@5 on Putnam 2015–2025, compared with 84.2\% for the strongest reranker. On IMO ProofBench, it achieves 91.7\% compared with 88.3\%. These results indicate improved coverage of annotated methods in the retrieved shortlists, particularly on Putnam.
\end{abstract}

\section{Introduction}
Large language models can propose high-level approaches to mathematical problems~\citep{jiang2022draft, lightman2024let, shao2024deepseekmath}, and recent theorem-proving systems use them to generate Lean code, check it, and revise it in response to feedback~\citep{xin2025deepseek, ren2025deepseek, lin2026goedel, ospanov2026apollo, varambally2026hilbert}. Before a prover can use a mathematical method, however, it must decide whether that method offers a useful step for the current goal. A suggestion such as ``use induction'' leaves open what to induct on, which part of the goal the method would address, and what remains to be proved. These questions arise for every candidate method: what would it do in this problem, and what conditions would its use require? We study method selection as the task of eliciting and comparing such problem-specific proposals before passing a method to the prover.

Existing systems support formal proof construction at several levels. Premise selection~\citep{yang2023leandojo, mikula2024magnushammer} and Mathlib search~\citep{gao2024semantic, lu2026lean} provide relevant definitions and theorems; proof retrieval offers examples of how related problems were solved~\citep{wang2024lego, thompson2024rango}; and verifier-guided loops use Lean's errors to revise a failed attempt~\citep{ji2025leanabell, zheng2023lyra, thakur2023context}. These resources are valuable, but finding a related fact or proof does not by itself explain how a mathematical method fits the current goal. A retrieved inequality, for example, may require nonnegativity that has not been established, while a useful method may come from a problem that looks quite different on the surface. Lexical and embedding retrievers rank candidates by their similarity to the theorem~\citep{zhang2025qwen3, neelakantan2022text}, and an LLM reranker may reason about their applicability~\citep{sun2023chatgpt}. A relevance score alone, however, does not show why a method applies to the current goal.

\begin{figure}[t]
    \centering
    \includegraphics[width=\linewidth]{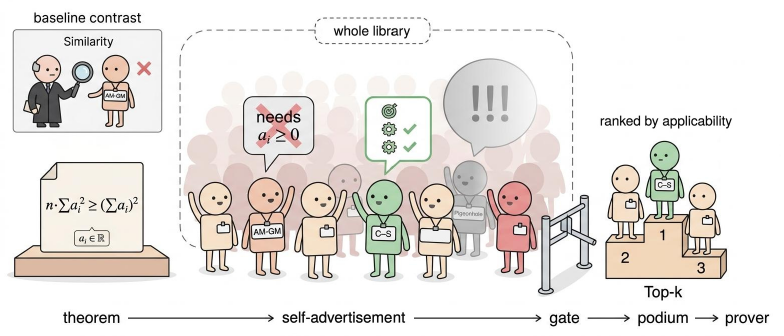}
    \caption{\textbf{Self-advertised method selection.} Given a theorem, a single model call asks every contract in the library how it would be used. A gate demotes proposals that are vague (Pigeonhole) or unsupported (AM-GM, since $a_i \ge 0$ is not given), and the top-ranked contract (Cauchy--Schwarz) goes to the prover. \emph{Inset:} similarity retrieval would pick AM-GM instead.}
    \label{fig:pipeline}
\end{figure}

The central problem is that surface similarity does not establish applicability. We therefore ask a more specific question than which methods look related: given a goal and a library of reusable methods, which methods could be applied, what part of the goal would each target, what action would it take there, and what conditions would that action require? A selector that commits to these answers makes its choices open to inspection rather than presenting only a relevance score. We evaluate whether such selections place suitable methods in the retrieved shortlist; whether the prover can then complete the remaining Lean obligations is a separate question.

We introduce \emph{self-advertisement}, an approach to method selection in which candidate methods propose how they could be used for the current goal before they are ranked. Instead of receiving only a relevance score, the prover receives a proposal that answers these questions for the selected method. We make this possible with \emph{Method Contracts}. Each contract describes a method's use and applicability conditions, together with execution guidance such as Mathlib anchors, a checked example or scaffold, and expected residual obligations. We construct 82 contracts from Putnam 2000--2014 materials and elicit their proposals in a single batched model call. Vague or unsupported proposals are demoted, so ranking depends on the concrete use a candidate offers rather than surface similarity alone. Figure~\ref{fig:pipeline} shows how contracts are proposed, screened, and ranked before the top-ranked contract is supplied to the Lean prover. Unlike an LLM reranker, which reorders candidates that similarity has already retrieved, self-advertisement considers every contract in the library and asks each to commit to a use that can be checked against the goal. We also show formally that a similarity-only selector can assign the same representation to goals that need different methods.

We evaluate method selection on Putnam 2015--2025 problems and IMO ProofBench problems against lexical, dense-embedding, and embedding-plus-LLM-reranking baselines, running every LLM-based selector with the same backbone under three backbones. Self-advertisement attains the highest recall@5 in all six backbone--benchmark settings, and on Putnam it also leads at hit@3, hit@5, and MRR under every backbone. With GPT-5.6 Luna, the backbone we use for proving, it reaches 95.0\% hit@5 on Putnam and 91.7\% on IMO ProofBench, compared with 84.2\% and 88.3\% for the strongest reranker. Rerankers remain more precise at rank~1 in most settings, so our claim concerns shortlist coverage rather than top-1 accuracy. A second-stage runoff narrows this gap: with GPT-5.6 Terra it raises Putnam hit@1 from 62.5\% to 72.5\%, above the strongest reranker at 67.5\%. In an Ax-Prover-style proof loop with a fixed compile budget, adding the top-ranked contract raises the proportion of proved problems from 5.8\% to 12.5\% on Putnam and from 10.0\% to 15.0\% on IMO ProofBench. This comparison adds the selected contract as a whole, so it does not isolate the contribution of selection, and most problems remain unsolved within the budget.

Our contributions are:
\begin{enumerate}
    \item \textbf{Method selection as an applicability problem.} We define method selection as distinct from similarity-based retrieval and show that a similarity-only selector can assign the same representation to goals needing different methods. We analyze how errors in applicability estimates affect the shortlist and what a checked scaffold guarantees under its assumptions.
    \item \textbf{Self-advertisement over Method Contracts.} Each contract links a method's intended action and required conditions to Mathlib anchors, a checked scaffold, and the obligations it leaves. In one batched call, each contract proposes a target, an action, and the conditions it requires for the current goal; vague or unsupported proposals are demoted before ranking.
    \item \textbf{A contract library and evaluation.} We build 82 contracts from Putnam 2000--2014 and evaluate on held-out Putnam 2015--2025 and IMO ProofBench. Self-advertisement reaches 95.0\% hit@5 on Putnam, compared with 84.2\% for the strongest reranker, and its coverage gains come largely from methods that do not look similar to the problem.
\end{enumerate}

\section{Related Work}
\paragraph{Knowledge for LLM-based formal proving.} LLM-based provers generate Lean proofs and revise them using verifier feedback, through whole-proof regeneration~\citep{xin2024deepseek, lin2026goedel}, step-level repair~\citep{liu2025proofaug, first2023baldur}, or agentic loops~\citep{achim2025aristotle, baba2025prover}. Premise selection~\citep{zhu2026premise, song2024lean} and Mathlib search~\citep{gao2026leansearch, asher2025leanexplore} supply formal facts; proof retrieval supplies worked examples~\citep{kasaura2026discovering}; and lemma libraries preserve results for reuse across problems~\citep{berlot2024library}. Informal sketches can also guide formal proof construction~\citep{wiedijk2003formal, lin2025lean}. These approaches provide different forms of useful knowledge, but retrieval typically returns a fact, lemma, or prior proof. Our retrieval unit is instead a mathematical method whose prerequisites, intended action, and execution guidance are represented together.

\paragraph{Selecting by applicability.} An LLM reranker can reason about whether a retrieved candidate fits a goal~\citep{qin2024large, weller2025rank1, zhuang2026rank}, although standard reranking exposes the result as a score or ordering rather than a structured account of how the candidate would be used. Earlier work on proof planning represents methods with preconditions and expected effects~\citep{bundy1988use, melis1999knowledge}, while adaptation-guided case retrieval favors cases that can be adapted to the new problem~\citep{leake1997case, smyth1998adaptation}. We build on these ideas; our contribution is to elicit problem-specific applicability proposals from a library of formalization methods and evaluate them for selection against lexical, embedding, and LLM-reranked retrieval.

 Self-advertisement combines reusable method knowledge with applicability-aware selection. Each Method Contract records a method's prerequisites, Mathlib guidance, and expected proof obligations. In one batched call, a model generates for each contract a proposal specifying its target in the current goal, the action it would take, and the conditions it requires; vague proposals are demoted before ranking. The resulting shortlist presents concrete, inspectable reasons for selecting each method and can guide a verifier-guided proof loop.

\section{Applicability-Aware Method Retrieval: A Formal Framework}
\label{sec:theory}

\subsection{Problem and contracts}
\label{sec:problem-contract}

Let $g$ be a Lean goal with local context $\Gamma$, and let $\mathcal C$ be a finite library of method contracts. A contract $c$ has parameters $x_c$, a selection-facing description, required conditions $\operatorname{Pre}_c(x_c)$, a target statement $G_c(x_c)$ with an intended action, an execution scaffold $S_c$, and a list of residual obligations $\operatorname{Obl}_c(x_c)$. The action reduces the target to the required conditions and the obligations: once $\operatorname{Pre}_c(\theta)$ and $\operatorname{Obl}_c(\theta)$ are proved for a binding $\theta$ of $x_c$, the action yields $G_c(\theta)$. A selector receives $g$, $\Gamma$, and the contract descriptions, and returns an ordered list $\pi(g)=(c_1,\ldots,c_{|\mathcal C|})$. Under self-advertisement, the selector first elicits a proposal for each contract, stating a binding $\hat\theta$, the target $G_c(\hat\theta)$, the action, and the conditions $\operatorname{Pre}_c(\hat\theta)$ that the step requires.

We distinguish semantic applicability from an observed label. A contract $c$ is \emph{applicable} to $g$ if some correct proof of $g$ contains a step that instantiates $c$ at a binding $\theta$: the step establishes $\operatorname{Pre}_c(\theta)$ from the context available at that point, proves $\operatorname{Obl}_c(\theta)$, obtains $G_c(\theta)$ by the action of $c$, and uses $G_c(\theta)$ in the rest of the proof. Let $\operatorname{App}(g)\subseteq\mathcal C$ denote the set of applicable contracts. In experiments we measure against a nonempty set $A(g)\subseteq\mathcal C$ of annotated reference contracts, which may contain several methods used in the reference solution. When the annotation is correct, $A(g)\subseteq\operatorname{App}(g)$, and the inclusion can be strict because other proofs may use other methods. A shortlist that contains a member of $A(g)$ therefore also contains an applicable contract, so agreement with $A(g)$ understates, rather than overstates, how often a shortlist contains a usable method.

\subsection{What a checked scaffold establishes}
\label{sec:scaffold}

\begin{proposition}[Conditional scaffold soundness]
\label{prop:scaffold}
Suppose a contract provides a Lean-checked term with no \texttt{sorry} or admitted proof holes, of type
\[
S_c:\;\forall x_c,\;\operatorname{Pre}_c(x_c)\to
\operatorname{Obl}_{c,1}(x_c)\to\cdots\to
\operatorname{Obl}_{c,r}(x_c)\to G_c(x_c).
\]
For any binding $\theta$ whose required conditions and obligations have checked proofs, applying $S_c$ yields a checked proof of $G_c(\theta)$.
\end{proposition}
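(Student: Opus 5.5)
The argument is a direct appeal to the Curry--Howard reading of Lean's type theory: a checked term of a dependent function type is a function, and applying it to well-typed arguments yields a well-typed term of the instantiated codomain. The plan has three steps, each routine, and the only real content is making explicit which trust assumptions the conclusion depends on.

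First, instantiate the universal quantifier. Since $S_c$ has type $\forall x_c,\,\operatorname{Pre}_c(x_c)\to\operatorname{Obl}_{c,1}(x_c)\to\cdots\to G_c(x_c)$, the application $S_c\,\theta$ is well typed whenever $\theta$ inhabits the type of $x_c$. By the typing rule for application of a $\Pi$-type, its type is the body with $\theta$ substituted, namely $\operatorname{Pre}_c(\theta)\to\operatorname{Obl}_{c,1}(\theta)\to\cdots\to\operatorname{Obl}_{c,r}(\theta)\to G_c(\theta)$. Any definitional unfolding needed to match the stated conditions is handled by Lean's conversion check. Second, given checked proofs $p:\operatorname{Pre}_c(\theta)$ and $q_i:\operatorname{Obl}_{c,i}(\theta)$ for $i=1,\ldots,r$, apply the non-dependent arrows one at a time. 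By induction on $r$, the term $S_c\,\theta\,p\,q_1\cdots q_r$ has type $G_c(\theta)$. Third, argue that this term is \emph{checked} and hole-free. Its free constants are exactly those of $S_c$, $p$, and the $q_i$, none of which use \texttt{sorry} or admitted axioms by hypothesis. The axiom set of an application is the union of the axiom sets of its parts, so the kernel accepts the composite term with no new holes, and \texttt{\#print axioms} reports only what the components already report.

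The main obstacle is not mathematical but a matter of faithfulness. The conclusion concerns the Lean types $\operatorname{Pre}_c(\theta)$, $\operatorname{Obl}_{c,i}(\theta)$, and $G_c(\theta)$ exactly as they are stated in the scaffold. If the proposal's $\hat\theta$ or its informal conditions differ from these formal types, the proposition says nothing about them. I would therefore state the result strictly at the level of the formal types, and note explicitly that it relies on kernel soundness and on the binding $\theta$ being a well-typed term rather than an informal description. Everything beyond that is a single application of the typing rule.
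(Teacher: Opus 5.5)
Your proposal is correct and follows the paper's proof: instantiate the $\Pi$-type at $\theta$, apply the kernel's application rule $r+1$ times to obtain $S_c\,\theta\,p\,q_1\cdots q_r : G_c(\theta)$, and note that the composite depends only on the axioms of its components, so no \texttt{sorry} is introduced. One small precision, which the paper also leaves implicit: the composite term contains $\theta$ itself, so its constants are those of $S_c$, $\theta$, $p$, and the $q_i$, and the clean axiom profile therefore also requires $\theta$ to be \texttt{sorry}-free.
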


\begin{proof}[Proof sketch]
Instantiate the checked term at $\theta$ and apply it to the checked proofs of its premises. Lean checks the resulting term.
\end{proof}

Proposition~\ref{prop:scaffold} applies only when the scaffold is checked in this parameterized form; a compiled example certifies only that example, not arbitrary new bindings. Even then, a self-advertised proposal identifies the remaining work for the new goal, $\operatorname{Pre}_c(\hat\theta)$ and $\operatorname{Obl}_c(\hat\theta)$; it does not discharge it.

\subsection{Similarity representations and the candidate-pool ceiling}
\label{sec:similarity}

Let $X=(g,\Gamma,\text{contract descriptions})$ denote the selector's input, and treat the annotated set $A=A(g)$ as random given $X$. Let $T=\phi(X)$ be the features available to a similarity-only selector, such as its vector of problem-to-contract similarity scores. The metrics hit@$k$ and recall@$k$ are defined in Appendix~\ref{app:metrics}. With information $F$, the best achievable expected hit@$k$ is
\[
V_k(F)=\mathbb E\!\left[\max_{S\subseteq\mathcal C,\;|S|=k}
\Pr(S\cap A\neq\varnothing\mid F)\right].
\]

\begin{proposition}[Information bound]
\label{prop:info}
If $T=\phi(X)$, then $V_k(X)\ge V_k(T)$. The inequality can be strict when distinct goals share the same similarity representation but require different shortlists, and the full descriptions distinguish them.
\end{proposition}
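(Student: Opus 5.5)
The inequality follows from the tower property of conditional expectation together with the convexity of a pointwise maximum. I will treat $\mathcal C$ as finite, so the collection of $k$-subsets is finite and every maximum below is attained. For each fixed $S$, write $p_S(F)=\Pr(S\cap A\neq\varnothing\mid F)$. Because $T=\phi(X)$ is $\sigma(X)$-measurable, $\sigma(T)\subseteq\sigma(X)$, and the tower property gives $p_S(T)=\mathbb E[p_S(X)\mid T]$ almost surely. The function $v\mapsto\max_S v_S$ on $\mathbb R^{\binom{|\mathcal C|}{k}}$ is convex, as a maximum of linear functionals. Conditional Jensen therefore gives
\[
\max_S p_S(T)=\max_S \mathbb E[p_S(X)\mid T]\le \mathbb E\!\left[\max_S p_S(X)\,\middle|\, T\right].
\]
Equivalently, for each $S'$ we have $\mathbb E[p_{S'}(X)\mid T]\le \mathbb E[\max_S p_S(X)\mid T]$, and we then maximize over $S'$. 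Taking expectations of both sides yields $V_k(T)\le V_k(X)$.

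An operational reading of the same argument is that any $T$-measurable shortlist rule $S(T)$ is also $X$-measurable. Hence the supremum defining $V_k(X)$ ranges over a larger class of rules.

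For strictness, a concrete example suffices, and the example should match the phrasing of the statement. Take $k=1$ and two contracts $c_1,c_2$. Let $X$ take two values $x_1,x_2$ with probability $1/2$ each, representing two goals whose contract descriptions differ. Set $A=\{c_1\}$ when $X=x_1$ and $A=\{c_2\}$ when $X=x_2$. Let $\phi(x_1)=\phi(x_2)$, so both goals share one similarity vector. Then $V_1(X)=1$, while $V_1(T)=\max(1/2,1/2)=1/2$. I will remark that strictness in general requires the maximizing set to differ across the $X$-values merged within a fiber of $\phi$, so that the Jensen step is strict on an event of positive probability.

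The main obstacle is measure-theoretic hygiene rather than substance: conditional probabilities are defined only almost surely, and the maximum must be taken over versions of them. This is harmless because the index set of $S$ is finite, so I can fix versions of the finitely many $p_S$ and take their pointwise maximum.
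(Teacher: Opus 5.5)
Your proof is correct and is essentially the paper's. The paper fixes a maximizing shortlist $S(t)$ for each value of $T$, uses the tower property to write $V_k(T)=\mathbb E[\Pr(S(\phi(X))\cap A\neq\varnothing\mid X)]$, and bounds this pointwise by the maximum over $S$ given $X$, which is exactly your ``operational reading''; your conditional-Jensen (equivalently, monotonicity) packaging reaches the same inequality without having to pick a measurable maximizer, and your strictness example is the $n=2$, $k=1$ case of the paper's construction, where $n>k$ equally likely goals with a common $T$ and distinct singleton reference sets give $V_k(T)=k/n<1=V_k(X)$ for any $k<|\mathcal C|$.
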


\begin{proof}[Proof sketch]
Because $T$ is a function of $X$, any shortlist chosen from $T$ can also be chosen from $X$. For a strict example, take $n>k$ equally likely goals with identical $T$ and distinct single reference contracts identifiable from $X$. Any $T$-based shortlist contains at most $k$ of these contracts and succeeds with probability at most $k/n$, whereas an $X$-based selector succeeds on every goal.
\end{proof}

Proposition~\ref{prop:info} is a statement about information, not a guarantee that a particular model extracts it. An LLM reranker reads full descriptions, but only of the contracts that a similarity stage passes to it, so similarity still determines what the reranker can consider.

\begin{corollary}[Candidate-pool ceiling]
\label{cor:ceiling}
Consider a two-stage selector that takes the top $m$ contracts $P_m(g)$ under a similarity-only selector and then orders them by any procedure, placing the remaining contracts after them. For every $k\le m$ and every goal $g$,
\[
\text{hit@}k\le\mathbf 1\bigl[P_m(g)\cap A(g)\neq\varnothing\bigr],
\qquad
\text{recall@}k\le\frac{|P_m(g)\cap A(g)|}{|A(g)|}.
\]
The two-stage selector's hit@$k$ and recall@$k$ are therefore bounded by the first-stage selector's hit@$m$ and recall@$m$, whichever model is used in the second stage.
\end{corollary}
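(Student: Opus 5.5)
The proof works directly from the definitions of hit@$k$ and recall@$k$ (Appendix~\ref{app:metrics}): hit@$k$ is $\mathbf 1[\pi_k(g)\cap A(g)\neq\varnothing]$ and recall@$k$ is $|\pi_k(g)\cap A(g)|/|A(g)|$, where $\pi_k(g)$ is the set of the first $k$ contracts in the ordering $\pi(g)$. Fix a goal $g$ and write $\pi(g)$ for the two-stage ordering. Its first $m$ positions are occupied by exactly the elements of $P_m(g)$, in whatever order the second stage chooses, and the remaining contracts follow. So for every $k\le m$,
\[
\pi_k(g)\subseteq P_m(g).
\]
This containment is the whole content of the corollary. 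It holds for every second-stage procedure, deterministic or randomized, because the second stage can only permute within $P_m(g)$.

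Intersecting both sides with $A(g)$ gives $\pi_k(g)\cap A(g)\subseteq P_m(g)\cap A(g)$. Both bounds follow from monotonicity. If the left-hand intersection is nonempty, so is the right-hand one, which gives the hit@$k$ inequality. Taking cardinalities and dividing by $|A(g)|>0$, which is allowed because $A(g)$ is assumed nonempty, gives the recall@$k$ inequality.

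For the final sentence, note that $P_m(g)$ is by definition the top-$m$ set of the first-stage similarity-only selector. Hence $\mathbf 1[P_m(g)\cap A(g)\neq\varnothing]$ and $|P_m(g)\cap A(g)|/|A(g)|$ are exactly that selector's hit@$m$ and recall@$m$ on $g$. The bounds hold pointwise in $g$, so they also hold after averaging over any benchmark or distribution of goals.

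The only step that needs care is the containment $\pi_k(g)\subseteq P_m(g)$. It depends on the assumption that the second stage never promotes a contract from outside $P_m(g)$ above one inside it. The statement builds this in by placing the remaining contracts after $P_m(g)$, so I will state it explicitly rather than treat it as an obstacle.
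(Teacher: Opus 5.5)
Your proposal is correct and matches the paper's proof essentially step for step: the containment $\pi_k(g)\subseteq P_m(g)$ for $k\le m$, intersecting with $A(g)$, then taking the nonemptiness indicator and the cardinality, and finally identifying the right-hand sides as the first-stage hit@$m$ and recall@$m$ and averaging over problems. The paper also remarks that the same containment gives $1/\operatorname{rank}(g)\le 1/(m+1)$ when $P_m(g)\cap A(g)=\varnothing$, but that goes beyond what the statement asks.
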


Our reranking baselines follow this two-stage design, and every shortlist size $k$ we report satisfies $k\le m$, so an annotated contract that similarity ranks low cannot be recovered by any reranking backbone. Both kinds of baseline depend on similarity, either as the ranking signal or as the filter that decides what the reranker sees. Self-advertisement judges every contract from its proposal and does not rely on similarity in either role; its coverage is limited by the information in $X$ and by how accurately the model judges applicability, which we analyze next.

\subsection{Estimating applicability and ranking regret}
\label{sec:regret}

For a fixed description $X=x$, let
\[
w_c(x)=\mathbb E\!\left[\frac{\mathbf 1[c\in A]}{|A|}\;\middle|\;X=x\right].
\]
By linearity of expectation, the expected recall@$k$ of a shortlist $S$ is $R_x(S)=\sum_{c\in S}w_c(x)$, so ranking contracts by $w_c(x)$ maximizes expected recall@$k$. With a single reference contract $Y$, $w_c(x)=\Pr(Y=c\mid X=x)$ and $R_x(S)$ is also the expected hit@$k$. Self-advertisement can be viewed as estimating this ordering from each contract's proposal. Its scores need not be probabilities: a ranking depends only on the order of the scores, so the bound below requires only that some increasing transformation of them be close to $w_c(x)$.

\begin{proposition}[Top-$k$ ranking regret]
\label{prop:regret}
Let $S_k^*$ contain the $k$ contracts with the largest $w_c(x)$, let $\widehat S_k$ contain the $k$ contracts with the largest scores $\hat s_c$, and suppose there is a strictly increasing map $f$ such that $|f(\hat s_c)-w_c(x)|\le\epsilon$ for every $c\in S_k^*\cup\widehat S_k$. Then
\[
R_x(S_k^*)-R_x(\widehat S_k)\le 2k\epsilon.
\]
\end{proposition}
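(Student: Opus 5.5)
We argue by a standard exchange between the two shortlists, using only the order-preservation of $f$ and the $\epsilon$-closeness on $S_k^*\cup\widehat S_k$. Write $u_c=f(\hat s_c)$. Because $f$ is strictly increasing, $\widehat S_k$ is also a set of $k$ contracts with the largest values $u_c$. Ties are broken the same way in both orderings, so for every $c\in\widehat S_k$ and $c'\notin\widehat S_k$ we have $u_c\ge u_{c'}$.

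We first cancel the common part. Since $R_x(S)=\sum_{c\in S}w_c(x)$ is additive,
\[
R_x(S_k^*)-R_x(\widehat S_k)=\sum_{c\in S_k^*\setminus\widehat S_k}w_c(x)-\sum_{c\in \widehat S_k\setminus S_k^*}w_c(x).
\]
Both difference sets have the same size $j\le k$, because both shortlists have exactly $k$ elements. We pair them by an arbitrary bijection $\sigma:S_k^*\setminus\widehat S_k\to\widehat S_k\setminus S_k^*$.

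Next we bound each pair. Fix $c\in S_k^*\setminus\widehat S_k$ and set $c'=\sigma(c)$. Since $c'\in\widehat S_k$ and $c\notin\widehat S_k$, we have $u_{c'}\ge u_c$. Both contracts lie in $S_k^*\cup\widehat S_k$, so the hypothesis applies to each. Hence
\[
w_c(x)-w_{c'}(x)\le (u_c+\epsilon)-(u_{c'}-\epsilon)\le 2\epsilon.
\]
Summing over the $j$ pairs gives $R_x(S_k^*)-R_x(\widehat S_k)\le 2j\epsilon\le 2k\epsilon$.

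The only point needing care is the ranking step. We must check that ``top $k$ by $\hat s$'' equals ``top $k$ by $u$'', including how ties are handled. This holds because $f$ is strictly increasing, so it preserves strict inequalities and maps equal scores to equal values. The tie-broken order is therefore unchanged, and we only ever use the weak inequality $u_{c'}\ge u_c$. The argument actually gives the sharper bound $2|S_k^*\setminus\widehat S_k|\,\epsilon$.
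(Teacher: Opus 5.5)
Your proof is correct and is essentially the paper's argument. The paper also passes to the transformed scores $f(\hat s_c)$, uses that $\widehat S_k$ is top-$k$ for them, and pays at most $\epsilon$ per contract on each side; it just does this in aggregate, via $\sum_{c\in S_k^*}f(\hat s_c)-\sum_{c\in\widehat S_k}f(\hat s_c)\le 0$, rather than by your pairwise exchange over $S_k^*\setminus\widehat S_k$, which gives the sharpened bound $2|S_k^*\setminus\widehat S_k|\,\epsilon$ directly instead of through the paper's closing remark that contracts in $S_k^*\cap\widehat S_k$ cancel.
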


\begin{proof}[Proof sketch]
Let $\hat w_c=f(\hat s_c)$. Since $f$ is strictly increasing, $\widehat S_k$ also contains the $k$ contracts with the largest $\hat w_c$. Write $R_x(S_k^*)-R_x(\widehat S_k)=\bigl[\sum_{c\in S_k^*}\hat w_c-\sum_{c\in\widehat S_k}\hat w_c\bigr]+\sum_{c\in S_k^*}(w_c-\hat w_c)-\sum_{c\in\widehat S_k}(w_c-\hat w_c)$. The bracket is nonpositive because $\widehat S_k$ maximizes the transformed scores, and each of the remaining $2k$ terms has magnitude at most $\epsilon$.
\end{proof}

The bound is conditional: it does not assert that our scores meet its premise, and it is a worst-case statement rather than a prediction of the measured gains. Its useful consequence is that only estimation errors on contracts that are, or appear to be, among the top $k$ affect the shortlist. Demoting vague or unsupported proposals acts on exactly these errors: it moves contracts that appear to be among the top $k$ but cannot state a checkable use below every supported nomination. A second source of error is that self-advertisement assigns each proposal an absolute score rather than comparing proposals directly, so scores from different contracts need not lie on the common scale that the premise requires. A runoff among the leading proposals (Section~\ref{sec:runoff}) addresses this where it matters: it compares them directly and can correct their relative order without rescoring the whole library.

\begin{proposition}[Multi-method coverage]
\label{prop:coverage}
Let $A\subseteq\mathcal C$ be the random set of annotated contracts for a goal with description $X=x$, and define the expected hit@$k$ objective
\[
H_x(S)=\Pr(S\cap A\neq\varnothing\mid X=x).
\]
Then $H_x$ is monotone and submodular in the candidate set $S$. A greedy selector using the true marginal gains, adding the contract that maximizes $H_x(S\cup\{c\})-H_x(S)$ at each step, attains at least $1-(1-1/k)^k$ times the optimal size-$k$ coverage, and hence at least $1-1/e$ times the optimum.
\end{proposition}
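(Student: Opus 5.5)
The plan is to write $H_x$ as a weighted coverage function and then invoke the classical greedy guarantee of Nemhauser, Wolsey and Fisher for monotone submodular maximization under a cardinality constraint. Conditional on $X=x$, the random set $A$ has some distribution over subsets of $\mathcal C$. For each realization $a$ let $p_a=\Pr(A=a\mid X=x)$. Then
\[
H_x(S)=\sum_{a\subseteq\mathcal C} p_a\,\mathbf 1[S\cap a\neq\varnothing].
\]
So $H_x$ is a nonnegative combination of the indicator functions $h_a(S)=\mathbf 1[S\cap a\neq\varnothing]$. Monotonicity and submodularity are preserved under nonnegative combinations, so it suffices to verify both properties for each $h_a$. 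Note also that $H_x(\varnothing)=0$.

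\textbf{Monotonicity of $h_a$.} If $S\subseteq T$ and $S$ meets $a$, then $T$ meets $a$.

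\textbf{Submodularity of $h_a$.} Take $S\subseteq T$ and $c\notin T$. The marginal gain $h_a(S\cup\{c\})-h_a(S)$ is $1$ exactly when $c\in a$ and $S\cap a=\varnothing$, and is $0$ otherwise. Since $S\subseteq T$, the condition $T\cap a=\varnothing$ implies $S\cap a=\varnothing$. Hence the gain at $T$ is at most the gain at $S$, which is the diminishing-returns property.

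\textbf{Greedy guarantee.} Let $S^*$ be an optimal set of size $k$ and $S_i$ the greedy set after $i$ steps. I would reproduce the standard argument directly rather than only cite it.
\begin{enumerate}
\item By monotonicity, $H_x(S^*)\le H_x(S_i\cup S^*)$.
\item By submodularity, $H_x(S_i\cup S^*)-H_x(S_i)\le\sum_{c\in S^*}\bigl[H_x(S_i\cup\{c\})-H_x(S_i)\bigr]$.
\item Each of these at most $k$ terms is at most the greedy gain $\delta_{i+1}=H_x(S_{i+1})-H_x(S_i)$.
\item Combining, $H_x(S^*)-H_x(S_{i+1})\le(1-1/k)\bigl(H_x(S^*)-H_x(S_i)\bigr)$.
\item Iterating $k$ times from $H_x(S_0)=0$ gives $H_x(S_k)\ge\bigl(1-(1-1/k)^k\bigr)H_x(S^*)$.
\end{enumerate}
Finally, $(1-1/k)^k\le e^{-1}$ because $\log(1-t)\le -t$, which yields the $1-1/e$ bound.

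The only delicate step is step 2, the submodular chaining. There I would order $S^*\setminus S_i=\{c_1,\dots,c_j\}$ and telescope $H_x(S_i\cup\{c_1,\dots,c_l\})$ over $l$. Each increment is then bounded by the single-element gain at $S_i$, using the diminishing-returns property established above.

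Two edge cases need brief remarks. First, the greedy step may pick an element that is already in $S^*$; this is harmless, since the bound only uses $|S^*\setminus S_i|\le k$. Second, greedy gains can become zero; in that case, by the inequality in step 4, $S_i$ is already optimal. Everything else is routine.
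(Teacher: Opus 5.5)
Your proposal is correct and follows the paper's proof essentially step for step. The paper also writes $H_x$ as a nonnegative average of the indicators $\mathbf 1[S\cap A\neq\varnothing]$, checks monotonicity and the diminishing marginal gain $\mathbf 1[c\in A]\,\mathbf 1[S\cap A=\varnothing]$ for each realization, and reproduces the Nemhauser--Wolsey--Fisher recursion $H_x(S^{\mathrm{opt}})-H_x(S_{i+1})\le(1-1/k)\bigl(H_x(S^{\mathrm{opt}})-H_x(S_i)\bigr)$; your sum over all of $S^*$ in step 2, rather than over $S^*\setminus S_i$ as in the paper, is harmless because the terms with $c\in S_i$ have zero marginal gain.
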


\begin{proof}[Proof sketch]
For any fixed annotated set $A$, adding a contract can turn a miss into a hit only if the current shortlist has not hit $A$; enlarging the shortlist cannot increase that marginal gain. Expectation preserves this diminishing-returns property. The greedy guarantee then follows from the classical result for monotone submodular maximization under a cardinality constraint~\citep{nemhauser1978analysis}.
\end{proof}

The two metrics therefore call for different selection rules. Expected recall@$k$ is additive over contracts, so sorting by $w_c$ is optimal for it. Expected hit@$k$ is submodular: once a shortlist contains a method that is likely to apply, a near-duplicate adds little, so sorting per-contract scores need not maximize it. Our implementation sorts per-contract scores, which is optimal for expected recall@$k$, and also for expected hit@$k$ whenever a goal has a single reference contract, since the two objectives then coincide. When several methods apply, Proposition~\ref{prop:coverage} identifies the objective that a diversity-aware selector would target.

\subsection{Selection and downstream proof success}
\label{sec:downstream}

Let $E_k$ denote the event that the first $k$ positions contain an annotated contract, and $Z_B$ the event that the prover completes a Lean proof within a budget of $B$ compile calls. For any prover,
\[
\Pr(Z_B)=\Pr(Z_B\mid\neg E_k)+\Pr(E_k)\,\Delta_k,
\qquad
\Delta_k=\Pr(Z_B\mid E_k)-\Pr(Z_B\mid\neg E_k).
\]
Selection controls $\Pr(E_k)$, which the shortlist metrics measure. The gap $\Delta_k$ measures how much a well-chosen contract helps the prover, and depends on the contract's execution guidance and the difficulty of its residual obligations. For a fixed $\Delta_k>0$, raising $\Pr(E_k)$ raises proof success, so better selection translates into more proofs whenever the selected contracts are useful to the prover.

Which selection metric matters depends on how the ranking is used. A prover that tries contracts in order within its budget needs a hit among the positions it can reach, which hit@$k$ measures. Many problems also require several methods, so a proof along the reference route needs every method in $A(g)$, not just one of them; recall@$k$ measures how much of that route a shortlist supplies. A second stage, such as the runoff, also chooses its top contract from this ranking, so the coverage of the list it re-ranks bounds the rank-1 accuracy it can reach.

\section{Method}

\label{sec:method}

Our method centers on self-advertisement, which elicits a proposal from every contract in a single call and ranks the contracts by whether they commit to a checkable use (Section~\ref{sec:selfad}). Self-advertisement is designed for shortlist coverage; when the first choice matters most, an optional runoff compares the leading proposals directly to improve rank-1 accuracy (Section~\ref{sec:runoff}). Both read only the selection side of each contract; the execution side of the selected contract is then passed to the prover. Section~\ref{sec:card-construction} describes how the 82 contracts are built.

\subsection{Self-advertisement}
\label{sec:selfad}

Self-advertisement replaces the question of how closely a contract resembles the goal with the question of how the contract would be used. In a single model call, the model reads the goal, its local context, and the selection-facing description of every contract, and returns a proposal for each contract. The proposal of contract $c$ makes a commitment that can be checked against the goal: a target $t_c$, which instantiates $G_c(\hat\theta)$ in terms of the goal's own objects; an action $a_c$; and the required facts $\widehat{\operatorname{Pre}}_c$, which instantiate $\operatorname{Pre}_c(\hat\theta)$. It also cites verified material $\sigma_c$ from the contract that supports this commitment, and records whether the contract offers itself ($o_c$), its role $\rho_c\in\{\text{main},\text{support}\}$, and an applicability score $\hat s_c$. 

A commitment that is vague or unsupported cannot be checked, so a main nomination must pass the commitment gate
\[
\operatorname{Gate}(c)=o_c\wedge\operatorname{Spec}(t_c)\wedge\operatorname{Spec}(a_c)\wedge(\sigma_c\neq\varnothing),
\]
where $\operatorname{Spec}$ holds when a target or action names a concrete object of the goal rather than a generic phrase. A main nomination that fails the gate is demoted to a supporting role with its score unchanged. The tiered ranking then orders contracts lexicographically by $(\tau_c,-\hat s_c)$, where $\tau_c=1$ for main nominations that pass the gate, $\tau_c=2$ for other contracts that offer themselves, and $\tau_c=3$ otherwise. The gate thus moves down the proposals that would otherwise occupy leading positions without a checkable use, which are the errors that matter most for the shortlist (Proposition~\ref{prop:regret}). 

\subsection{Runoff}
\label{sec:runoff}

Self-advertisement targets shortlist coverage, but a prover that receives a single contract depends on hit@1 (Section~\ref{sec:downstream}). Its absolute integer scores often leave the leading proposals close or tied, with ties broken by contract identifier rather than content. Since only errors among the leading candidates affect the shortlist (Proposition~\ref{prop:regret}), the runoff re-ranks the top $q$ contracts by comparing their proposals. It uses the same backbone, so any gain comes from direct comparison rather than a stronger model.

In one pass of $q-1$ calls from the top of the window down, the model sees each contract, the contract above it, and the proposal that contract wrote earlier in the pass; it writes a proposal for the lower contract and judges whether it fits the proof better, and if so the two swap. Contracts outside the window keep their order.

\subsection{Contract library}
\label{sec:card-construction}

Our library contains 82 Method Contracts built from solutions to Putnam 2000--2014 problems; no problem from the evaluation sets, Putnam 2015--2025 and IMO ProofBench, is used as a source. Each contract has two sides, following the definition in Section~\ref{sec:problem-contract}. The \emph{selection side} is what a selector reads: the method in one sentence, the kind of target it typically addresses, the conditions under which it applies, cases in which it should not be used, and a manifest naming the verified material on its execution side, such as its scaffold and worked instance, without their Lean code. The \emph{execution side} is supplied only to the prover: a Lean scaffold whose parameters make the method's bindings explicit, a compiled worked instance, the Mathlib declarations it relies on, its required conditions stated as Lean hypotheses, and the residual obligations left after its step.

We build contracts in three steps, using GPT-5.6 Sol for both extraction and drafting. First, the model extracts the methods used in the reference solutions and proposes merges between methods that play the same role in different problems; the authors review every extracted method and every proposed merge. Second, the model drafts both sides of each contract from its source solutions. Third, every Lean artifact on the execution side is compiled with Lean v4.27.0 and Mathlib at commit \texttt{a3a10db} (release v4.27.0). All 82 contracts pass this check: each provides a scaffold stated for arbitrary parameters and a compiled worked instance.

\section{Experiment}
\subsection{Setup}
\label{sec:setup}

\paragraph{Baselines and base models.} We compare against random ordering, BM25~\citep{robertson2009probabilistic}, TF-IDF cosine~\citep{salton1988term}, and three embedding models (OpenAI text-embedding-3-small, text-embedding-3-large~\citep{neelakantan2022text}, and Qwen3-Embedding-0.6B~\citep{zhang2025qwen3}). For each embedding model, we also evaluate a listwise LLM reranker~\citep{sun2023chatgpt} over its top-10 candidates. Self-advertisement generates goal-specific proposals for all 82 contracts in one batched call, followed by the commitment gate and tiered ranking (Section~\ref{sec:selfad}); the runoff re-ranks the top 10 of this ranking (Section~\ref{sec:runoff}), so both second stages operate on the same number of candidates. We run all LLM-based selectors with DeepSeek V4.1 Flash~\citep{xu2026deepseek}, GPT-5.6 Luna, and GPT-5.6 Terra~\citep{singh2025openai}; within each block of Table~\ref{tab:selection}, every LLM-based selector uses the same backbone.

\paragraph{Benchmarks and labels.} We evaluate on Putnam 2015--2025 (120 problems) and IMO ProofBench (60 problems). The contract library is built from Putnam 2000--2014 solutions, so no evaluation problem is a contract source. Each problem is labeled with the contracts whose method appears in its official solution: GPT-5.6 Sol proposed labels from the official solution, and a human checked every label. Labels were fixed before any selector was run, so hit@$k$ measures agreement with these labels, a conservative estimate of applicability (Section~\ref{sec:problem-contract}).

\begin{table}[t]
\centering
\small
\setlength{\tabcolsep}{1.5pt}
\begin{tabular*}{\textwidth}{@{}l@{\extracolsep{\fill}}cccccc cccccc@{}}
\toprule
& \multicolumn{6}{c}{Putnam 2015--2025 ($n{=}120$)} & \multicolumn{6}{c}{IMO ProofBench ($n{=}60$)} \\
\cmidrule(lr){2-7} \cmidrule(lr){8-13}
Selector & hit@1 & hit@3 & hit@5 & R@3 & R@5 & MRR & hit@1 & hit@3 & hit@5 & R@3 & R@5 & MRR \\
\midrule
\multicolumn{13}{l}{\emph{No LLM}} \\
Random             & .058 & .158 & .225 & .038 & .053 & .141 & .083 & .167 & .267 & .041 & .076 & .158 \\
BM25               & .208 & .400 & .508 & .104 & .140 & .337 & .183 & .333 & .400 & .103 & .147 & .295 \\
TF-IDF cosine      & .267 & .442 & .525 & .118 & .144 & .382 & .233 & .367 & .433 & .135 & .165 & .318 \\
OpenAI emb-3-small & .325 & .533 & .683 & .157 & .229 & .463 & .267 & .517 & .617 & .213 & .263 & .411 \\
OpenAI emb-3-large & .425 & .667 & .725 & .204 & .267 & .556 & .367 & .583 & .717 & .268 & .339 & .523 \\
Qwen3-Emb-0.6B     & .450 & .675 & .775 & .217 & .291 & .579 & .350 & .617 & .767 & .282 & .380 & .510 \\
\midrule
\multicolumn{13}{l}{\emph{GPT-5.6 Terra}} \\
OpenAI-small + rerank & .642 & .767 & .775 & .285 & .324 & .704 & .483 & .717 & .767 & .336 & .397 & .607 \\
OpenAI-large + rerank & .667 & .800 & .842 & .315 & .373 & .742 & .600 & .783 & .900 & .378 & .455 & .712 \\
Qwen3 + rerank        & .675 & .808 & .858 & .338 & .407 & .751 & \textbf{.633} & .783 & .900 & .373 & .469 & .724 \\
Self-adv.\ (ours)     & .625 & .908 & .975 & .342 & .472 & .778 & .517 & \textbf{.867} & .883 & \textbf{.422} & .494 & .694 \\
\quad + runoff & \textbf{.725} & \textbf{.950} & \textbf{.983} & \textbf{.374} & \textbf{.503} & \textbf{.831} & .600 & .833 & \textbf{.933} & .409 & \textbf{.512} & \textbf{.736} \\
\midrule
\multicolumn{13}{l}{\emph{GPT-5.6 Luna}} \\
OpenAI-small + rerank & .592 & .742 & .783 & .286 & .321 & .673 & .517 & .750 & .783 & .350 & .400 & .628 \\
OpenAI-large + rerank & \textbf{.692} & .792 & .825 & .312 & .361 & .747 & \textbf{.583} & \textbf{.833} & .883 & .404 & .460 & \textbf{.714} \\
Qwen3 + rerank        & .683 & .817 & .842 & .337 & .385 & .756 & .567 & .817 & .883 & .407 & .468 & .688 \\
Self-adv.\ (ours)     & .658 & \textbf{.917} & .950 & .346 & .446 & .787 & .550 & .817 & .917 & .414 & \textbf{.554} & .693 \\
\quad + runoff & .683 & .908 & \textbf{.958} & \textbf{.352} & \textbf{.461} & \textbf{.800} & .567 & .817 & \textbf{.933} & \textbf{.417} & .538 & .703 \\
\midrule
\multicolumn{13}{l}{\emph{DeepSeek V4.1 Flash}} \\
OpenAI-small + rerank & .550 & .750 & .775 & .262 & .305 & .643 & .550 & .767 & .767 & .354 & .388 & .640 \\
OpenAI-large + rerank & .567 & .775 & .825 & .293 & .367 & .673 & .550 & \textbf{.833} & .850 & .383 & .446 & .685 \\
Qwen3 + rerank        & .575 & .783 & .842 & .307 & .371 & .687 & \textbf{.617} & .783 & .850 & .388 & .449 & \textbf{.705} \\
Self-adv.\ (ours)     & .600 & .808 & \textbf{.900} & .293 & .393 & .719 & .517 & .817 & .883 & .395 & .493 & .672 \\
\quad + runoff & \textbf{.608} & \textbf{.842} & .892 & \textbf{.308} & \textbf{.399} & \textbf{.735} & .517 & \textbf{.833} & \textbf{.917} & \textbf{.412} & \textbf{.523} & .684 \\
\bottomrule
\end{tabular*}
\caption{Shortlist quality against annotated reference contracts. hit@$k$: fraction of problems whose top-$k$ list contains at least one annotated contract. R@$k$: fraction of annotated contracts that appear in the top $k$. Bold: best within each LLM block; ties are all bolded.}
\label{tab:selection}
\end{table}

\subsection{Main Results}
\label{sec:results-selection}

Lexical matching reaches at most .525 hit@5 on Putnam, dense embeddings reach .775, and LLM reranking of the embedding shortlist adds another 7--8 points (Table~\ref{tab:selection}). Self-advertisement attains the highest recall@5 in all six backbone--benchmark settings, and on Putnam it also leads at hit@3, hit@5, and MRR under every backbone. Its lead grows with the backbone while the rerankers plateau: from DeepSeek V4.1 Flash to GPT-5.6 Luna to GPT-5.6 Terra, its Putnam hit@5 rises from .900 to .950 to .975, whereas the best reranker stays at .842, .842, and .858, as Corollary~\ref{cor:ceiling} predicts for a reranker that cannot recover contracts its embedding stage missed. The lead under DeepSeek also shows that it does not depend on the model family used to draft contracts and propose labels. On IMO ProofBench, whose problems come from different competitions than the Putnam-derived library, self-advertisement still leads at recall@5 under all three backbones (.494, .554, and .493, against at most .469, .468, and .449), and with the runoff it gives the best hit@5 under every backbone (.933, .933, and .917). Since one problem corresponds to 1.7 points there, we read these results as broader coverage at comparable top-5 hit rates.

Coverage and rank-1 accuracy diverge: rerankers remain more precise at rank~1 in most settings, and the runoff targets this gap by comparing the leading proposals. With GPT-5.6 Terra, it raises Putnam hit@1 from .625 to .725, above the strongest reranker (.675), and IMO ProofBench hit@1 from .517 to .600, and MRR becomes best in both Terra blocks (.831 and .736). With DeepSeek V4.1 Flash, it gives the best Putnam hit@1 and MRR in its block (.608 and .735). With GPT-5.6 Luna the changes are smaller, at most three problems per hit metric, and coverage is largely preserved.

\subsection{Applicability Beyond Similarity}
\begin{figure}[t]
    \centering
    \includegraphics[width=\textwidth, trim={0 0.45in 0 0}, clip]{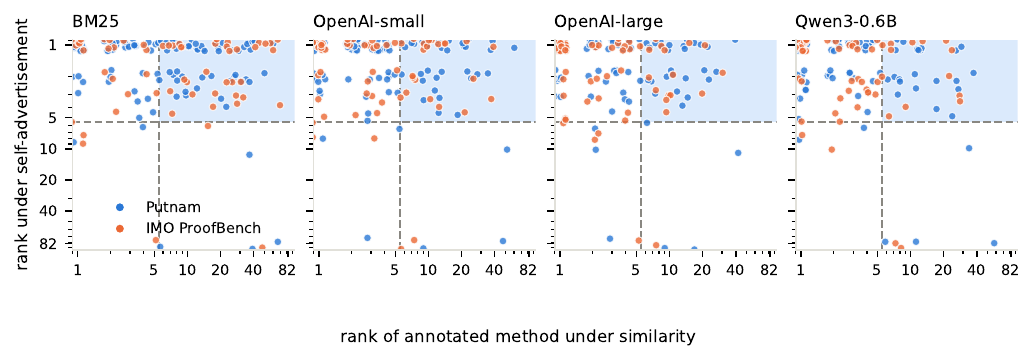}\par
    {\sffamily\footnotesize rank of annotated method under similarity}
    \caption{Each dot is one problem; $y$ is the rank of its highest-ranked annotated method under self-advertisement (GPT-5.6 Luna, shared across panels), and $x$ is that method's rank by problem--method similarity under the panel's retriever. Dashed lines mark the top five; the shaded region holds methods that similarity ranks outside its top five but self-advertisement places inside.}
    \label{fig:rank_vs_rank}
\end{figure}

Figure~\ref{fig:rank_vs_rank} shows where these coverage gains come from. If self-advertisement merely rearranged methods that already looked similar to the problem, annotated methods near the top of its ranking would also sit near the top of the similarity rankings. Instead, across BM25 and all three embedding retrievers, the shaded regions contain annotated methods that similarity ranks outside its top five but self-advertisement places inside its top five. Many of them are so dissimilar to the problem that they never enter the rerankers' candidate pool, so no reranking backbone can recover them. These are the cases that motivate our approach: a method can offer an applicable step toward the goal even when its description is not especially similar to the problem statement. Because self-advertisement judges every contract rather than only the candidates that similarity retrieves, it brings such methods into the shortlist, and each arrives with a target, an action, and required conditions that can be checked against the goal. The figure thus connects the coverage gains in Table~\ref{tab:selection} to methods that similarity-based candidate selection tends to overlook.

\subsection{Downstream Proving}

\begin{table}[t]
\centering
\small
\setlength{\tabcolsep}{4pt}
\renewcommand{\arraystretch}{0.95}
\begin{tabular}{@{}lcccc@{\hspace{10pt}}ccc@{}}
\toprule
 & \multicolumn{4}{c}{Putnam 2015--2025} & \multicolumn{3}{c}{IMO ProofBench} \\
\cmidrule(lr){2-5} \cmidrule(lr){6-8}
Method & ALG & ANA & DISC & All & Basic & Adv. & All \\
\midrule
Base & 0.0 & 0.0 & 0.0 & 0.0 & 3.3 & 0.0 & 1.7 \\
pass@20 & 2.3 & 0.0 & 0.0 & 0.8 & 3.3 & 0.0 & 1.7 \\
APOLLO & 4.5 & 0.0 & 2.2 & 2.5 & 13.3 & 0.0 & 6.7 \\
AxProverBase & \textbf{9.1} & 3.3 & 4.3 & 5.8 & 20.0 & 0.0 & 10.0 \\
\midrule
+ Self-adv.\ contract (ours) & \textbf{9.1} & \textbf{10.0} & \textbf{17.4} & \textbf{12.5} & \textbf{26.7} & \textbf{3.3} & \textbf{15.0} \\
\bottomrule
\end{tabular}
\caption{Downstream proving success rate (\%) with GPT-5.6 Luna under a budget of 10 Lean compilations per problem.}
\label{tab:proving}
\end{table}

As a first check of whether selected contracts help a prover, Table~\ref{tab:proving} reports proof success with GPT-5.6 Luna and a budget of ten Lean compilations per problem. Adding the top-ranked contract to the Ax-Prover-style loop more than doubles its success rate on Putnam, from 5.8\% to 12.5\%, and raises it from 10.0\% to 15.0\% on IMO ProofBench. On Putnam the gains are concentrated in analysis (3.3\% to 10.0\%) and discrete problems (4.3\% to 17.4\%), while algebra is unchanged at 9.1\%; on IMO ProofBench the contract adds Basic problems (20.0\% to 26.7\%) and the only Advanced problem solved by any arm. Base and pass@20 solve almost no problems, even though pass@20 uses twice the budget, and APOLLO trails the Ax-Prover-style control on both benchmarks. The comparison supplies the selected contract as a whole, so the gain reflects selection and contract content together (Section~\ref{sec:downstream}); most problems remain unsolved within this budget.

\section{Conclusion}
We presented self-advertisement, a method-selection approach that asks each Method Contract to propose a target, action, and required conditions for the current theorem. Across Putnam and IMO ProofBench, it improves coverage of annotated methods in shortlists, including methods ranked low by lexical or embedding similarity. Adding the top-ranked contract to a fixed-budget proof loop also increases the number of problems proved on both benchmarks. These results support selecting proof methods by their proposed use, while leaving two limits clear: shortlist coverage does not guarantee a completed proof, and the downstream comparison adds the contract as a whole rather than isolating selection from its execution guidance. Future work should separate these contributions and help provers carry promising method proposals through to verified proofs.

\section*{AI Use Statement}
In this work, we used generative AI tools for the following tasks. GPT-5.6 Sol extracted methods from reference solutions, proposed merges between them, and drafted both sides of each Method Contract; the authors reviewed every extracted method and proposed merge, and every Lean artifact was compiled and checked. GPT-5.6 Sol also proposed the reference labels for evaluation problems, and a human checked every label before any selector was run. Large language models are components of the evaluated systems (Section~\ref{sec:setup}). We also used generative AI to refine the statements of theoretical results and their proof sketches, to suggest the structure of parts of the paper, and to edit the text for readability.

\section*{Ethics Statement}
This work uses publicly available competition problems and official solutions (Putnam and IMO ProofBench) and involves no human subjects or personal data. The method selects proof strategies for formal theorem proving and raises no foreseeable risks beyond those of the underlying language models.

\section*{Reproducibility Statement}
Section~\ref{sec:card-construction} describes how the 82 contracts were built, including the Lean version (v4.27.0) and Mathlib commit (\texttt{a3a10db}) used to check every execution-side artifact; Appendix~\ref{app:env} lists the full pinned environment, and Figure~\ref{fig:contract-example} shows an example contract. Section~\ref{sec:setup} describes the baselines, backbones, benchmarks, and labeling procedure. The tiered ranking breaks ties by contract identifier, so it is deterministic given the model outputs.

\bibliography{iclr2027_conference}
\bibliographystyle{iclr2027_conference}

\appendix
\section{Selection metrics}
\label{app:metrics}

For a goal $g$, a selector returns an ordering $\pi(g)=(c_1,\ldots,c_{|\mathcal C|})$ of the whole library, and $A(g)\subseteq\mathcal C$ is the nonempty set of annotated contracts. Two-stage selectors place the contracts outside their candidate pool after the pool, so every selector orders the full library and every annotated contract has a rank. For a cutoff $k$, write $\pi_k(g)=\{c_1,\ldots,c_k\}$, and let $\operatorname{rank}(g)=\min\{i: c_i\in A(g)\}$ be the position of the first annotated contract. Over the $N$ problems of a benchmark we report
\begin{align*}
\text{hit@}k &= \frac{1}{N}\sum_{g}\mathbf 1\bigl[\pi_k(g)\cap A(g)\neq\varnothing\bigr],\\
\text{recall@}k &= \frac{1}{N}\sum_{g}\frac{|\pi_k(g)\cap A(g)|}{|A(g)|},\\
\text{MRR} &= \frac{1}{N}\sum_{g}\frac{1}{\operatorname{rank}(g)}.
\end{align*}
Table~\ref{tab:selection} abbreviates recall@$k$ as R@$k$. hit@$k$ is the fraction of problems whose shortlist contains at least one annotated method; recall@$k$ measures how much of the annotated set the shortlist covers; MRR rewards placing the first annotated contract early. For every problem, $\text{recall@}k\le\text{hit@}k$, both are nondecreasing in $k$, and the two coincide when $|A(g)|=1$.

\subsection{Multi-method coverage}
\label{app:coverage}

Section~\ref{sec:regret} notes that expected recall@$k$ and expected hit@$k$ call for different selection rules. This section proves the result that makes the second objective precise.

\noindent\textbf{Proposition~\ref{prop:coverage} (restated).}
\emph{Let $A\subseteq\mathcal C$ be the random set of annotated contracts for a goal with description $X=x$, and define the expected hit@$k$ objective $H_x(S)=\Pr(S\cap A\neq\varnothing\mid X=x)$. Then $H_x$ is monotone and submodular in the candidate set $S$. A greedy selector using the true marginal gains, adding at each step the contract that maximizes $H_x(S\cup\{c\})-H_x(S)$, attains at least $1-(1-1/k)^k$ times the optimal size-$k$ coverage, and hence at least $1-1/e$ times the optimum.}

\begin{proof}
Fix a realization of $A$ and let $h_A(S)=\mathbf 1[S\cap A\neq\varnothing]$. If $S\subseteq S'$ then $S\cap A\subseteq S'\cap A$, so $h_A(S)\le h_A(S')$: $h_A$ is monotone. For $c\notin S'$, the marginal gain of $c$ at $S$ is
\[
h_A(S\cup\{c\})-h_A(S)=\mathbf 1[c\in A]\cdot\mathbf 1[S\cap A=\varnothing],
\]
which is nonincreasing in $S$ because $\mathbf 1[S'\cap A=\varnothing]\le\mathbf 1[S\cap A=\varnothing]$ when $S\subseteq S'$: $h_A$ is submodular. Since $H_x(S)=\mathbb E[h_A(S)\mid X=x]$ is an average of the functions $h_A$ with nonnegative weights, and both properties are preserved under such averages, $H_x$ is monotone and submodular, with $H_x(\varnothing)=0$.

The greedy guarantee is the classical bound of \citet{nemhauser1978analysis}; we include the short argument. Let $S^{\mathrm{opt}}$ maximize $H_x$ over sets of size $k$, and let $S_i$ be the greedy set after $i$ steps, with $S_0=\varnothing$. For each $i<k$, monotonicity and submodularity give
\[
H_x(S^{\mathrm{opt}})
\le H_x(S_i\cup S^{\mathrm{opt}})
\le H_x(S_i)+\!\!\sum_{c\in S^{\mathrm{opt}}\setminus S_i}\!\!\bigl[H_x(S_i\cup\{c\})-H_x(S_i)\bigr]
\le H_x(S_i)+k\bigl[H_x(S_{i+1})-H_x(S_i)\bigr],
\]
where the last step uses $|S^{\mathrm{opt}}\setminus S_i|\le k$ and the fact that the greedy step picks the largest marginal gain. Writing $\delta_i=H_x(S^{\mathrm{opt}})-H_x(S_i)$, this rearranges to $\delta_{i+1}\le(1-1/k)\,\delta_i$, so $\delta_k\le(1-1/k)^k\,\delta_0=(1-1/k)^k\,H_x(S^{\mathrm{opt}})$. Hence $H_x(S_k)\ge\bigl(1-(1-1/k)^k\bigr)H_x(S^{\mathrm{opt}})\ge(1-1/e)\,H_x(S^{\mathrm{opt}})$, using $(1-1/k)^k\le e^{-1}$.
\end{proof}

The two metrics therefore call for different selection rules. Expected recall@$k$ is additive over contracts, $R_x(S)=\sum_{c\in S}w_c(x)$, so sorting by $w_c(x)$ is optimal for it, and Proposition~\ref{prop:regret} bounds the loss from sorting by estimates instead. Expected hit@$k$ is submodular: once a shortlist contains a method that is likely to apply, a near-duplicate adds little, so sorting per-contract scores need not maximize it. Our implementation sorts per-contract scores, which is optimal for expected recall@$k$, and also for expected hit@$k$ whenever a goal has a single reference contract, since the two objectives then coincide (Appendix~\ref{app:metrics}). When several methods apply, Proposition~\ref{prop:coverage} identifies the objective that a diversity-aware selector would target; we do not run the greedy procedure.

\section{Proofs}
\label{app:proofs}

This appendix gives full proofs of Propositions~\ref{prop:scaffold}--\ref{prop:regret} and Corollary~\ref{cor:ceiling}; Appendix~\ref{app:coverage} proves Proposition~\ref{prop:coverage}.

\subsection{Conditional scaffold soundness}

\noindent\textbf{Proposition~\ref{prop:scaffold} (restated).}
\emph{Suppose a contract provides a Lean-checked term with no \texttt{sorry} or admitted proof holes, of type
\[
S_c:\ \forall x_c,\ \operatorname{Pre}_c(x_c)\to\operatorname{Obl}_{c,1}(x_c)\to\cdots\to\operatorname{Obl}_{c,r}(x_c)\to G_c(x_c).
\]
For any binding $\theta$ whose precondition and obligations have checked proofs, applying $S_c$ yields a checked proof of $G_c(\theta)$.}

\begin{proof}
Lean's kernel accepts a term only together with its type, and it accepts an application $f\,a$ whenever $f$ has a dependent function type $\Pi\,y{:}\alpha,\ \beta(y)$ and $a$ has type $\alpha$, assigning the application the type $\beta(a)$. Instantiating the universal quantifier, $S_c\,\theta$ is accepted with type $\operatorname{Pre}_c(\theta)\to\operatorname{Obl}_{c,1}(\theta)\to\cdots\to\operatorname{Obl}_{c,r}(\theta)\to G_c(\theta)$. Let $p$ be a checked term of type $\operatorname{Pre}_c(\theta)$ and $q_1,\ldots,q_r$ checked terms of types $\operatorname{Obl}_{c,1}(\theta),\ldots,\operatorname{Obl}_{c,r}(\theta)$. Applying the rule $r+1$ times, the term $S_c\,\theta\,p\,q_1\cdots q_r$ is accepted with type $G_c(\theta)$. The premise that $S_c$ contains no \texttt{sorry} matters because \texttt{sorry} is an axiom that inhabits every type, so a term built from it would type-check without proving anything. Since $S_c$, $p$, and the $q_i$ are checked without that axiom, their application depends only on the axioms they depend on, and the result is a proof of $G_c(\theta)$ in the environment in which they were checked (Appendix~\ref{app:env}).
\end{proof}

The proposition concerns reuse, not discovery. It does not say that $\theta$ is a useful binding for the goal, that $\operatorname{Pre}_c(\theta)$ or the obligations are provable, or that $G_c(\theta)$ advances the proof; it says only that once these are discharged, the method's own step needs no further checking.

\subsection{Information bound}

\noindent\textbf{Proposition~\ref{prop:info} (restated).}
\emph{If $T=\phi(X)$, then $V_k(X)\ge V_k(T)$. The inequality can be strict when distinct goals share the same similarity representation but require different shortlists, and the full descriptions distinguish them.}

\begin{proof}
For each value $t$ of $T$, choose a set $S(t)$ with $|S(t)|=k$ that attains the maximum in the definition of $V_k(T)$; a maximizer exists because $\mathcal C$ is finite. Then
\[
V_k(T)=\mathbb E\bigl[\Pr(S(T)\cap A\neq\varnothing\mid T)\bigr]
=\Pr\bigl(S(T)\cap A\neq\varnothing\bigr)
=\mathbb E\bigl[\Pr(S(\phi(X))\cap A\neq\varnothing\mid X)\bigr],
\]
where the middle equality is the tower property and the last one uses that $S(\phi(X))$ is a function of $X$. For every $x$, the set $S(\phi(x))$ is one of the sets over which the inner maximum in $V_k(X)$ is taken, so $\Pr(S(\phi(x))\cap A\neq\varnothing\mid X=x)\le\max_{|S|=k}\Pr(S\cap A\neq\varnothing\mid X=x)$. Taking expectations over $X$ gives $V_k(T)\le V_k(X)$.

For strictness, fix $n$ with $k<n\le|\mathcal C|$, and let $X$ be uniform over $n$ distinct values $x_1,\ldots,x_n$ with $\phi(x_i)=t$ for every $i$. Let $c_1,\ldots,c_n$ be distinct contracts and set $A=\{c_i\}$ whenever $X=x_i$. Given $X=x_i$, any shortlist containing $c_i$ hits with probability one, so $V_k(X)=1$. Given $T=t$, which carries no information about $i$, a shortlist $S$ hits with probability $\tfrac1n|S\cap\{c_1,\ldots,c_n\}|\le k/n$, so $V_k(T)=k/n<1$.
\end{proof}

The construction is not artificial: goals whose statements share vocabulary with the same contracts, but whose proofs use different methods, have exactly this form.

\subsection{Candidate-pool ceiling}

\noindent\textbf{Corollary~\ref{cor:ceiling} (restated).}
\emph{Consider a two-stage selector that takes the top $m$ contracts $P_m(g)$ under a similarity-only selector and then orders them by any procedure, placing the remaining contracts after them. For every $k\le m$ and every goal $g$,
$\text{hit@}k\le\mathbf 1[P_m(g)\cap A(g)\neq\varnothing]$ and $\text{recall@}k\le|P_m(g)\cap A(g)|/|A(g)|$.}

\begin{proof}
By construction, the first $m$ positions of the final list are a permutation of $P_m(g)$, and the remaining positions hold $\mathcal C\setminus P_m(g)$. For $k\le m$, therefore, $\pi_k(g)\subseteq P_m(g)$, and hence $\pi_k(g)\cap A(g)\subseteq P_m(g)\cap A(g)$. The two inequalities follow by taking the indicator of nonemptiness and the cardinality of both sides. The right-hand sides are the first-stage selector's own hit@$m$ and recall@$m$ on $g$, since $P_m(g)$ is its top-$m$ list, and averaging over problems preserves both inequalities. The same containment bounds the reciprocal rank: if $P_m(g)\cap A(g)=\varnothing$, the first annotated contract appears at position $m+1$ or later, so $1/\operatorname{rank}(g)\le1/(m+1)$.
\end{proof}

\subsection{Top-$k$ ranking regret}

\noindent\textbf{Proposition~\ref{prop:regret} (restated).}
\emph{Let $S_k^*$ contain the $k$ contracts with the largest $w_c(x)$, let $\hat S_k$ contain the $k$ contracts with the largest scores $\hat s_c$, and suppose there is a strictly increasing map $f$ such that $|f(\hat s_c)-w_c(x)|\le\epsilon$ for every $c\in S_k^*\cup\hat S_k$. Then $R_x(S_k^*)-R_x(\hat S_k)\le2k\epsilon$.}

\begin{proof}
Write $w_c$ for $w_c(x)$ and set $\hat w_c=f(\hat s_c)$ for every $c\in\mathcal C$. Because $f$ is strictly increasing, $\hat s_c>\hat s_{c'}$ if and only if $\hat w_c>\hat w_{c'}$, so a set of $k$ contracts with the largest $\hat s_c$ is also a set of $k$ contracts with the largest $\hat w_c$. Consequently $\sum_{c\in\hat S_k}\hat w_c\ge\sum_{c\in S}\hat w_c$ for every $S$ with $|S|=k$, in particular for $S=S_k^*$. Now decompose
\[
R_x(S_k^*)-R_x(\hat S_k)
=\Bigl[\sum_{c\in S_k^*}\hat w_c-\sum_{c\in\hat S_k}\hat w_c\Bigr]
+\sum_{c\in S_k^*}(w_c-\hat w_c)
-\sum_{c\in\hat S_k}(w_c-\hat w_c).
\]
The bracket is at most $0$. Every $c$ in the two remaining sums lies in $S_k^*\cup\hat S_k$, where $|w_c-\hat w_c|\le\epsilon$ by assumption, so the second sum is at most $k\epsilon$ and the third is at least $-k\epsilon$. The difference is therefore at most $2k\epsilon$. Contracts in $S_k^*\cap\hat S_k$ cancel between the last two sums, so the bound sharpens to $2\,|S_k^*\setminus\hat S_k|\,\epsilon$.
\end{proof}

Three remarks. First, the map $f$ only needs to exist; it need not be known, so raw scores such as integers on a fixed scale are admissible as long as some monotone rescaling of them is close to $w_c(x)$ on the leading contracts. Second, the assumption constrains at most $2k$ contracts; estimates elsewhere in the library may be arbitrarily poor. Third, the sharpened bound says that the regret is proportional to the number of contracts on which the two shortlists disagree, which is the reading used in Section~\ref{sec:selfad}: only errors that move a contract into or out of the top $k$ matter.
            
\section{Contract library}

\subsection{A Method Contract}
Figure~\ref{fig:contract-example} shows an abridged contract from our library; auxiliary fields, such as variant rules and closing tactics, are omitted.
\begin{figure}[!ht]
\small
\begin{tabular}{@{}p{0.31\textwidth}p{0.66\textwidth}@{}}
\toprule
\multicolumn{2}{@{}l}{\textbf{Method Contract} \texttt{ANA.MVT}: Mean value theorem / Rolle \hfill domain: analysis} \\
\midrule
\multicolumn{2}{@{}l}{\textit{Selection side} (read by the selector)} \\[2pt]
Method & Obtain an interior point where the derivative equals the secant slope (Lagrange), or vanishes when the endpoint values agree (Rolle). \\
Typical target & A claim about $f'$ at an unspecified point, or a bound $|f(b)-f(a)|\le M(b-a)$ reached through $f'$. \\
Applies when & $f$ is real-valued, continuous on $[a,b]$, and differentiable on $(a,b)$, with $a<b$; for Rolle, also $f(a)=f(b)$. \\
Not for & Complex- or vector-valued $f$; sequences or integer-indexed data; goals that need a specific $c$. \\
Manifest & Scaffolds \texttt{mvt\_slope}, \texttt{mvt\_rolle}; anchors \texttt{exists\_deriv\_eq\_slope}, \texttt{exists\_deriv\_eq\_zero}, and variants; one worked instance. \\
\midrule
\multicolumn{2}{@{}l}{\textit{Execution side} (supplied to the prover)} \\[2pt]
Parameters $x_c$ & \texttt{f : ℝ → ℝ}, \texttt{a b : ℝ} \\
Required conditions $\operatorname{Pre}_c$ & \texttt{a < b}; for Rolle, \texttt{f a = f b}. \\
Residual obligations $\operatorname{Obl}_c$ & \texttt{ContinuousOn f (Set.Icc a b)}; for Lagrange, \texttt{DifferentiableOn ℝ f (Set.Ioo a b)}. \\
Target $G_c$ & \texttt{∃ c ∈ Set.Ioo a b, deriv f c = (f b - f a) / (b - a)} (Lagrange); \texttt{deriv f c = 0} (Rolle). \\
Scaffolds &
\begin{lstlisting}[style=lean]
theorem mvt_slope (f : ℝ → ℝ) (a b : ℝ) (hab : a < b)
    (hfc : ContinuousOn f (Set.Icc a b))
    (hfd : DifferentiableOn ℝ f (Set.Ioo a b)) :
    ∃ c ∈ Set.Ioo a b, deriv f c = (f b - f a) / (b - a) :=
  exists_deriv_eq_slope f hab hfc hfd

theorem mvt_rolle (f : ℝ → ℝ) (a b : ℝ) (hab : a < b)
    (hfI : f a = f b) (hfc : ContinuousOn f (Set.Icc a b)) :
    ∃ c ∈ Set.Ioo a b, deriv f c = 0 :=
  exists_deriv_eq_zero hab hfc hfI
\end{lstlisting} \\
Worked instance &
\begin{lstlisting}[style=lean]
-- instantiates mvt_slope at (f, 0, 1)
example (f : ℝ → ℝ)
    (hfc : ContinuousOn f (Set.Icc (0:ℝ) 1))
    (hfd : DifferentiableOn ℝ f (Set.Ioo (0:ℝ) 1))
    (h0 : f 0 = 0) (h1 : f 1 = 1) :
    ∃ c ∈ Set.Ioo (0:ℝ) 1, deriv f c = 1 := by
  obtain ⟨c, hc, hdc⟩ := mvt_slope f 0 1 (by norm_num) hfc hfd
  exact ⟨c, hc, by rw [hdc, h1, h0]; norm_num⟩
\end{lstlisting} \\
Verification & Lean v4.27.0, Mathlib \texttt{a3a10db}; axioms \texttt{propext}, \texttt{Classical.choice}, \texttt{Quot.sound}; no \texttt{sorry}. \\
\bottomrule
\end{tabular}
\caption{An example Method Contract. The selector reads the selection side; the prover receives the execution side. All Lean code compiles in the pinned environment (Appendix~\ref{app:env}).}
\label{fig:contract-example}
\end{figure}

\subsection{Lean environment}\label{app:env}      
\label{app:env}

All compile checks in this paper use the same environment: Lean v4.27.0 and Mathlib at commit \texttt{a3a10db0e9d66acbebf76c5e6a135066525ac900} (release v4.27.0). This covers the checks that certify each contract's execution side during construction (Section~\ref{sec:card-construction}) and every compilation in the proving experiments, so a scaffold that compiled when its contract was built remains valid when the prover uses it, as Proposition~\ref{prop:scaffold} requires. Every verifier instance loads precompiled Mathlib files from a single prebuilt environment image, and the runner exits at startup if the Mathlib revision recorded in the contract library differs from the one in the environment's Lake manifest, which prevents silent drift between the library and the verifier. Table~\ref{tab:lean-env} lists the pinned dependencies.

\begin{table}[h]
\centering
\small
\begin{tabular}{lll}
\toprule
Component & Commit & Version \\
\midrule
Lean toolchain & -- & v4.27.0 \\
Mathlib & \texttt{a3a10db0} & v4.27.0 \\
Aesop & \texttt{cb837cc2} & -- \\
Batteries & \texttt{b25b36a7} & -- \\
Qq & \texttt{bd58c9ef} & -- \\
ProofWidgets & \texttt{c04225ee} & v0.0.85 \\
ImportGraph & \texttt{8f497d55} & -- \\
LeanSearchClient & \texttt{5ce7f0a3} & -- \\
Plausible & \texttt{009dc1e6} & -- \\
Cli & \texttt{55c37290} & v4.27.0 \\
\bottomrule
\end{tabular}
\caption{Pinned Lean environment. Commits are abbreviated to eight characters; the full Mathlib commit is given in the text.}
\label{tab:lean-env}
\end{table}

\section{Experimental details}             
\subsection{Downstream}\label{app:downstream} 
We use GPT-5.6 Luna for proving, in a whole-proof repair loop with a budget of $B=10$ Lean compilations per problem. \emph{Base} is one attempt without retrieval; \emph{pass@20} draws 20 independent samples and is therefore given twice the budget; \emph{APOLLO}~\citep{ospanov2026apollo} alternates generation with Lean-guided repair; \emph{AxProverBase} is Ax-Prover~\citep{requena2026minimalagentautomatedtheorem} run without contracts under the same budget and serves as the direct control; \emph{+ Self-adv.\ contract (ours)} adds the top-ranked self-advertised contract to AxProverBase's initial prompt. All arms except pass@20 share the prover and the 10-compilation budget

\end{document}